\newtheorem{proposition}{Proposition}
\newtheorem{assumption}{Assumption}
\pgfplotsset{compat=newest} 
\pgfplotsset{plot coordinates/math parser=false} 
\newcommand{\CR}[1]{#1}
\newcommand{\iprod}[2]{\langle #1, #2 \rangle}
\newcommand{\norm}[1]{\| #1 \|}
\newcommand{\C}[2]{\mathcal{C}(#1,#2)}
\newcommand{\dd}{\mathrm{d}}
\newenvironment{customlegend}[1][]{%
  \begingroup
  \csname pgfplots@init@cleared@structures\endcsname
  \pgfplotsset{#1}%
}{%
  \csname pgfplots@createlegend\endcsname
  \endgroup
}%
\def\addlegendimage{\csname pgfplots@addlegendimage\endcsname}
\ificcvfinal\pagestyle{empty}\fi
\begin{document}

\title{Controlling Neural Networks via Energy Dissipation}

\author{
Michael Moeller\\
University of Siegen\\
{\tt\small michael.moeller@uni-siegen.de}\\
\and
Thomas M\"ollenhoff\\
TU Munich\\
{\tt\small thomas.moellenhoff@tum.de}\\
\and
Daniel Cremers\\
TU Munich\\
{\tt\small cremers@tum.de}
}

\maketitle

\begin{abstract}
  The last decade has shown a tremendous success in solving various computer vision problems with the help of deep learning techniques. Lately, many works have demonstrated that learning-based approaches with suitable network architectures even exhibit superior performance for the solution of (ill-posed) image reconstruction problems such as deblurring, super-resolution, or medical image reconstruction. The drawback of purely learning-based methods, however, is that they cannot provide provable guarantees for the trained network to follow a given data formation process during inference. In this work we propose \emph{energy dissipating networks} that iteratively compute a descent direction with respect to a given cost function or energy at the currently estimated reconstruction. Therefore, an adaptive step size rule such as a line-search, along with a suitable number of iterations can guarantee the reconstruction to follow a given data formation model encoded in the energy to arbitrary precision, and hence control the model's behavior even during test time.
  We prove that under standard assumptions, descent using the direction predicted by the network converges (linearly) to the global minimum of the energy. 
  We illustrate the effectiveness of the proposed approach in experiments on single image super resolution and computed tomography (CT) reconstruction, and further illustrate extensions to convex feasibility problems. 
\end{abstract}

\section{Introduction}
In the overwhelming number of imaging applications, the quantity of interest cannot be observed directly, but rather has to be inferred from measurements that contain implicit information about it. For instance, color images have to be restored from the raw data captured through a color filter array (demosaicking), suboptimal foci or camera movements cause blurs that ought to be removed to obtain visually pleasing images (deblurring), and non-invasive medical imaging techniques such as magnetic resonance imaging (MRI) or computed tomography (CT) can faithfully be modeled as sampling the image's Fourier transform and computing its Radon transform, respectively. Mathematically, the above problems can be phrased as \textit{linear inverse problems} in which one tries to recover the desired quantity $\hat{u}$ from measurements $f$ that arise from applying an application-dependent linear operator $A$ to the unknown and contain additive noise $\xi$:
\begin{align}
\label{eq:linearInvProb}
f = A\hat{u} + \xi.
\end{align} 
Unfortunately, most practically relevant inverse problems are \textit{ill-posed}, meaning that equation \eqref{eq:linearInvProb} either does not determine $\hat{u}$ uniquely even if $\xi = 0$, or tiny
amounts of noise $\xi$ can alter the naive prediction of $\hat{u}$ significantly. These phenomena have been well-investigated from a mathematical perspective with \textit{regularization methods} being
the tool to still obtain provably stable reconstructions. 

At the heart of all regularization methods is the idea not to determine a naive estimate like $u = A^\dagger f$ for the pseudo-inverse $A^\dagger$. Instead, one determines an estimate $u$ for which 
\begin{align}
\label{eq:fidelityConstraint}
 \|Au-f\|\leq \delta,
\end{align}
holds for a suitable norm $\| \cdot \|$ and some $\delta \in \mathbb{R}$ proportional to $\| \xi \|$. The actual $u$ is found by some iterative procedure that is stopped as soon as \eqref{eq:fidelityConstraint} is met, or by explicitly enforcing a desired regularity via a penalty term or constraint in an energy minimization method.

\begin{figure*}[htb]
\setlength{\tabcolsep}{0pt}
\begin{tabular}{cccc}
\begin{tikzpicture}
\node(a){\includegraphics[width = 0.237\textwidth]{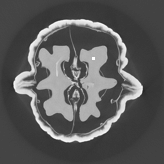}};
\node at(a.center)[draw, green, line width=2pt, ellipse, minimum width=22pt, minimum height=22pt,rotate=0,xshift=8pt, yshift=17pt]{};
\end{tikzpicture}   &
\begin{tikzpicture}
\node(a){\includegraphics[width = 0.237\textwidth]{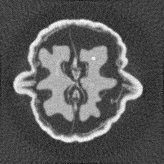}};
\node at(a.center)[draw, green, line width=2pt, ellipse, minimum width=22pt, minimum height=22pt,rotate=0,xshift=8pt, yshift=17pt]{};
\end{tikzpicture}   &
\begin{tikzpicture}
\node(a){\includegraphics[width = 0.237\textwidth]{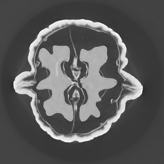}};
\node at(a.center)[draw, green, line width=2pt, ellipse, minimum width=22pt, minimum height=22pt,rotate=0,xshift=8pt, yshift=17pt]{};
\end{tikzpicture}   &
\begin{tikzpicture}
\node(a){\includegraphics[width = 0.237\textwidth]{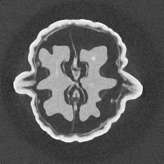}};
\node at(a.center)[draw, green, line width=2pt, ellipse, minimum width=22pt, minimum height=22pt,rotate=0,xshift=8pt, yshift=17pt]{};
\end{tikzpicture}   \\
{\small ground truth} & {\small gradient descent, PSNR 27.5} & {\small learned, PSNR 40.2} & {\small energy dissipating, PSNR 33.0}
\end{tabular}
\caption{Illustrating the danger of learning safety-critical reconstruction algorithms: When overfitting a network to reconstruct a walnut image without a simulated pathology and using it to predict the reconstruction of a walnut with such an artifact (left image, white blob in the green circle), the reconstruction (middle right) is extremely clean, but the most important aspect is entirely lost, despite being contained in the data as illustrated in the plain gradient descent reconstruction (middle left). Our energy dissipating networks (right image) are able to benefit from the power of learning-based techniques while allowing to provably guarantee data fidelity constraints such as \eqref{eq:fidelityConstraint}.}
\label{fig:teaser}
\end{figure*}

Motivated by the success of deep learning in classification and semantic segmentation tasks \cite{krizhevsky2012imagenet,long2015fully}, researchers have proposed to tackle a wide variety of different linear inverse problems with deep learning techniques, e.g., deblurring \cite{Xu14, Tao_2018_CVPR}, super-resolution \cite{Dong16}, demosaicking \cite{Gharbi16}, MRI- \cite{Yang18}, or CT-reconstruction \cite{Kan16}. All the aforementioned results are based on acquiring or simulating exemplary pairs $(\hat{u}_i, f_i)$ of ground-truth images $\hat{u}_i$ and data $f_i$ and training a network $\mathcal{G}$ which, for suitable parameters $\theta$, accomplishes a mapping $\mathcal{G}(f_i;\theta) \approx \hat{u}_i$. While such methods show truly remarkable reconstruction quality in practical applications, there is no mechanism to provably guarantee that the solutions $\mathcal{G}(f;\theta)$ predicted by the network actually explain the measured data in accordance with the (known) data formation process \eqref{eq:linearInvProb}, i.e., no a-priori proximity bound in the form of \eqref{eq:fidelityConstraint} can be guaranteed. The latter can pose a significant risk in trusting the networks prediction, particularly if little, or biased training data is provided. We illustrated such a risk in a toy example in Fig.~\ref{fig:teaser}: A CT scan is simulated on the ground truth image shown on the left, which is a slight modification of the walnut image from \cite{walnutData} with an additional small blob in the upper right. The simplest reconstruction, unregularized gradient descent, is very noisy, but the anomaly (our blob), is clearly visible. \CR{A network that has been trained on reconstructing the walnut without the blob is noise-free and has the highest peak-signal-to-noise-ratio (PSNR). However, it completely removed the important pathology (middle right).} This is particularly disconcerting because the measured data clearly contains information about the blob as seen in the gradient descent image, and a purely learning-based approach may just decide to disrespect the data formation process \eqref{eq:fidelityConstraint} during inference. 

Our proposed approach (illustrated in Fig. \ref{fig:teaser} on the right) is a learning-based iterative algorithm that can \textit{guarantee} the solution $u$ to meet the constraint \eqref{eq:fidelityConstraint} for any predefined (feasible) value of $\delta$. Despite also training our network on reconstructing the walnut without the anomaly only, it is able to reconstruct the blob, allowing its use \textit{even in safety critical applications like medical imaging}. 

The key idea is to train a neural network $\mathcal{G}$ for predicting a descent direction $d^k$ for a given model driven (differentiable) energy $E$ such as $E(u) = \frac{1}{2}\|Au-f\|^2$. The network takes the current iterate $u^k$, the data $f$, and the gradient $\nabla E(u^k)$ as inputs and predicts a direction:
\begin{align}
\label{eq:networkDirection}
d^k = \mathcal{G}(u^k,f,\nabla E(u^k);\theta).
\end{align}
The output of the network is constrained in such a way that for a fixed parameter $\zeta > 0$ the following condition
provably holds for arbitrary points $u^k$:
\begin{align}
\label{eq:descentDirectionProperty}
\langle d^k , \nabla E(u^k) \rangle \geq \zeta \|\nabla E(u^k)\|.
\end{align}
This property allows us to employ a descent algorithm 
\begin{align}
\label{eq:proposedScheme}
u^{k+1} = u^k - \tau_k ~ d^k,
\end{align}
in which a suitable (adaptive) step size rule for $\tau_k$, such as a line-search, can guarantee the convergence to a minimizer of $E$ under weak assumptions. Therefore, the proposed scheme \eqref{eq:proposedScheme} can provably enforce constraints like \eqref{eq:fidelityConstraint} on unseen test data, while also benefiting from training data and the inductive bias of modern deep architectures.

\section{Related Work} 
\subsection{Model-based solutions of inverse problems}
\label{sec:regularizationMethods}
Two of the most successful strategies for solving inverse problem in imaging are variational methods as well as (related) iterative regularization techniques. The former phrase the solution as a minimization problem of the form 
\begin{align}
\label{eq:variationalMethod}
\hat{u} = \arg \min_u H(u; f) + \alpha R(u), 
\end{align}
for a data fidelity term $H$, regularization term $R$ and a trade-off parameter $\alpha$ that has to be chosen depending on the amount of noise in the data. As an example, a common choice for the data term is $H(u; f) = \frac{1}{2}\|Au-f\|^2$ and $R(u) = |\nabla u|$ is often the celebrated total variation (TV) regularizer \cite{rof}.  Variants include minimizing the regularizer subject to a constraint on the fidelity, also know as Morozow regularization, or minimizing the fidelity term subject to a constraint on the regularizer, also known as Lavrentiev regularization. Other popular regularizers include extensions of the TV, for example  introducing higher-order derivatives \cite{tgv,chambolle}, sparsity priors for certain representations, such as wavelets or dictionaries \cite{mairal2014sparse}, or application-driven penalties \cite{he2011single}. 

Closely related iterative approaches construct sequences $\{ u_k \}$ that decrease the fidelity term monotonically along a suitable path (for instance steered by an appropriate regularizer as in \cite{BregmanIteration}), and become a regularization method via stopping criteria such as the \textit{discrepancy principle}, see \cite{benning_burger_2018}.  

\subsection{Data-driven and hybrid methods}
While the model-based reconstruction techniques of Sec.~\ref{sec:regularizationMethods} admit a thorough mathematical understanding of their behavior with well defined regularization properties, see \cite{benning_burger_2018} for an overview, the solution quality on common benchmarks can often be enhanced significantly by turning to data-driven approaches. 

Most frequently, convolutional neural networks (CNNs) \cite{fukushima1980neocognitron,lecun1989backpropagation} are used to solve such image reconstruction problems, e.g. for deblurring \cite{Xu14}, single image super resolution \cite{osendorfer2014image,Dong16}, or CT reconstruction \cite{Unser17}. Another line of works pioneered in \cite{gregor2010learning} is to take algorithms used to solve the model-based minimization problem \eqref{eq:variationalMethod}, unroll them, and declare certain parts to be learnable \cite{schmidt2014shrinkage, Zheng15, Kobler17, learningRDE}. Although such architectures are well motivated and often yield excellent results with rather few learnable parameters, they do not come with any provable guarantees.

Alternatively, researchers have considered learning the regularizer in \eqref{eq:variationalMethod} \cite{Roth2009, ksvd, Hawe13, Chen14}. This typically results in difficult bilevel optimization problems or requires additional assumptions such as sparsity. Moreover, the considered penalties are rather simple, such that they do not quite reach the performance of fully data-driven models. 

Other approaches have proposed algorithmic schemes that replace proximal operators of the regularizer by a CNN, but don't come with any provable guarantees \cite{Meinhardt17, Chang17,Zhang17a} or impose restrictive conditions on the network \cite{Romano2017}. 

In order to obtain guarantees, the recent approaches \cite{liu2018bridging, liu2019learning} propose convergent energy minimization methods that incorporate update steps based on deep networks. The fundamental difference to our approach is, that in these works the data-driven updates are considered an error, which is then controlled by an interleaving with sufficiently many standard optimization steps. In our approach, every update is solely based on the neural network, and each update provably reduces the energy. Furthermore, the models in \cite{liu2018bridging, liu2019learning} are trained on auxilliary tasks (such as image denoising), while we propose a training procedure that is more in line with the actual task of energy minimization. More general approaches on learning to optimize are based on recurrent neural networks \cite{andrychowicz2016learning} or reinforcement learning \cite{li2016learning}. However, it still remains challenging to provide a rigorous convergence analysis.

Some recent approaches have considered to regularize the reconstruction by the parametrization of a network, e.g., deep image priors in \cite{Ulyanov2018DeepIP}, or deep decoders \cite{Heckel2019DeepDC}. This requires to solve a large nonconvex optimization problem which is rather expensive and due to local optima may also prevent a provable satisfaction of \eqref{eq:fidelityConstraint}.

Finally, another line of works train generative models and represent the variable $u$ to be reconstructed as the transformation of a latent variable $z$. While these approaches yield strong results, see e.g., \cite{Bora17a}, the nonconvex optimization can not necessarily guarantee a constraint like \eqref{eq:fidelityConstraint}.

Current methods for imposing constraints directly in the network architecture \cite{optNet, frerix2019linear} are limited to low-dimensional linear inequality constraints or polyhedra with moderate numbers of vertices.
Therefore, directly incorporating (and differentiating through) projections onto complex constraint sets as \eqref{eq:fidelityConstraint} currently seems infeasible. Nevertheless, the proposed technique, which we present in the following section, can ensure difficult constraints such as \eqref{eq:fidelityConstraint} within a data-driven approach.

\section{Energy Dissipating Networks}
\subsection{Paths in the energy landscape}
\begin{figure*}[htb]
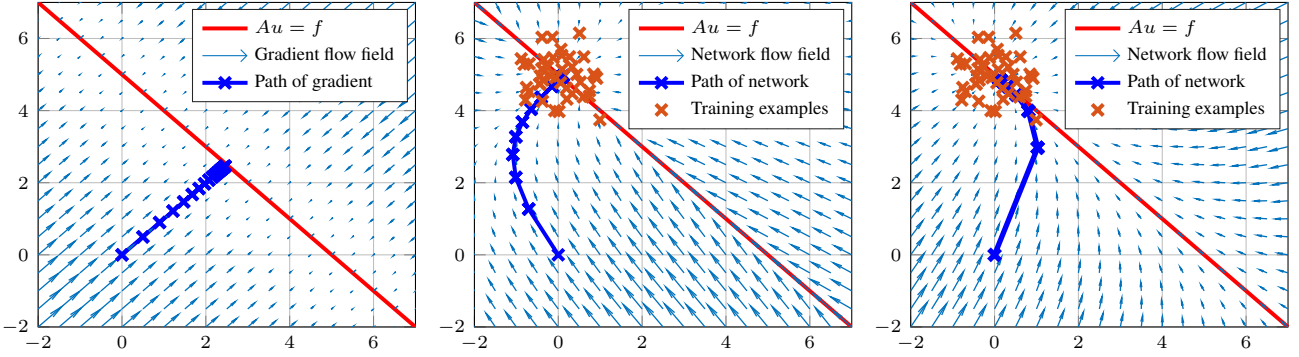

\centering
\begin{center}
\input{f1.tex}
\input{f2.tex}
\input{f3.tex}
\end{center}
\vspace{-0.5cm}
\caption{Two dimensional illustration of the proposed energy dissipation network for solving the (underdetermined) inverse problem of finding $u$ such that $u_1 + u_2 = f$: The left plot shows the path gradient descent takes when initialized with zero, along with the magnitude and direction of the gradient at various points in the 2D plane. The middle plot shows the same visualization determined with our energy dissipating networks and additional training data. The right plot shows the results of an energy dissipating network with a more aggressive enforcement of descent directions via scaling $\zeta$ in \eqref{eq:descentDirectionProperty} with an additional $\|\nabla E(u^k)\|$. As we can see, in both cases the network learned to create paths that lead towards the provided training data while provably guaranteeing non-increasing data-fidelity costs. The depicted networks even move solutions within the subspace $Au=f$ towards the clustered data points. }
\label{fig:2dIllustration}
\end{figure*}

As presented in Eq.~\eqref{eq:proposedScheme},  we propose a simple technique that uses a data-driven neural network $\mathcal{G}$ to iteratively predict descent directions with respect to a given problem dependent cost function or \textit{energy} $E$, which we assume to be continuously differentiable. While the idea of learning the gradient descent update in inverse problems has been considered under the framework of unrolled algorithms \cite{Adler_2017}, our architecture is conceptually different: Instead of training an unrolled architecture in an end-to-end fashion, we merely train the network on separate iterations by minimizing an objective of the form
\begin{align}
\label{eq:trainingLoss}
    \mathbb{E}_{u^* \sim \mathcal{Y}} \mathbb{E}_{u \sim \mathcal{X}_\theta} \left[ \ell \left( u - \mathcal{G}(u,f,\nabla E(u); \theta), u^* \right) \right],
\end{align}
with respect to the model parameters $\theta$. In the above equation, $\ell$ is a loss function and $\mathcal{Y}$ the distribution of the "ground truth" data. A novelty in our approach is to consider an input distribution $\mathcal{X}_{\theta}$, which depends on the model parameters itself. We will make the above more precise later on, but $\mathcal{X}_\theta$ can be thought of as a distribution over iterates which are visited when running a descent method with directions provided by the model. While an objective of the form \eqref{eq:trainingLoss} seems daunting to optimize, we show that a simple "lagged" approach in which the parameters of $\mathcal{X}_\theta$ are updated sporadically and otherwise fixed works well in practice.
Another novelty of our approach is to use the Euclidean projection onto the set 
\begin{align}
\label{eq:constraintSet}
\C{\zeta}{g} = \left \{ d ~|~ \langle d, g \rangle \geq \zeta \| g \| \right \}
\end{align}
for $g$ being the gradient of the energy (or Lyapunov function) $E$ at the current iterate as a last layer of the network. By additionally utilizing a backtracking line-search detailed in Alg.~\ref{alg:ourScheme} to determine suitable step-sizes $\tau_k$, we can state the following convergence result:
\begin{proposition}
\label{prop:convergence}
Consider the iterates given by Alg.~\ref{alg:ourScheme} for an arbitrary starting point $u^0$, a continuously differentiable and coercive energy $E$, and a continuous (in the inputs) model $\mathcal{G}([u,f,\nabla E(u)];\theta)$ that satisfies  
\begin{align}
\label{eq:descentDirConstraint}
 \mathcal{G}(u, f, \nabla E(u); \theta) \in \C{\zeta}{\nabla E(u)} \qquad \forall u.
\end{align}
Then the while-loop of Alg.~\ref{alg:ourScheme} always terminates after finitely many iterations. The energy of the iterates is monotonically decreasing, i.e.
\begin{align}
\label{eq:energyDissipation}
    E(u^{k+1}) \leq E(u^k),
\end{align}
and the sequence of $\|\nabla E(u^k)\|$ converges to zero, i.e. 
\begin{align}
    \lim_{k\rightarrow \infty} \|\nabla E(u^k)\| = 0. 
\end{align}
Moreover, if $E$ is strictly convex, then the sequence of $u^k$ converges with 
\begin{align}
    \lim_{k\rightarrow \infty} u^k  = \arg \min_u E(u).
\end{align}
 \end{proposition}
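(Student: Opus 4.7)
The plan is to adapt classical convergence arguments for descent methods with Armijo backtracking to our setting, the only twist being that our angle-type condition \eqref{eq:descentDirConstraint} involves a single power of $\|\nabla E(u^k)\|$ rather than $\|\nabla E(u^k)\|^2$. I would split the proof into four ingredients: (i) well-posedness of the line search, (ii) monotone energy decrease, (iii) stationarity in the limit, and (iv) convergence of iterates in the strictly convex case.

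For (i) and (ii), I assume that Alg.~\ref{alg:ourScheme} accepts a step $\tau_k$ as soon as an Armijo-type inequality
\begin{align*}
E(u^k - \tau d^k) \leq E(u^k) - c\,\tau\,\langle d^k, \nabla E(u^k)\rangle
\end{align*}
is met for some fixed $c\in(0,1)$, halving $\tau$ otherwise. Whenever $\nabla E(u^k)\neq 0$, property \eqref{eq:descentDirConstraint} yields $\langle d^k, \nabla E(u^k)\rangle \geq \zeta\|\nabla E(u^k)\| > 0$, so a first-order Taylor expansion of $E$ around $u^k$ shows the inequality holds for all sufficiently small $\tau$. Thus backtracking terminates after finitely many halvings and the update immediately produces \eqref{eq:energyDissipation}. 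If $\nabla E(u^k)=0$, the iteration is already at a stationary point and the remaining claims are trivial.

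Step (iii) is where the real work lies, and it is the main obstacle. Monotonicity combined with coercivity traps the whole sequence in the compact sublevel set $S=\{u:E(u)\leq E(u^0)\}$, on which $\nabla E$ is uniformly continuous, and by continuity of $\mathcal{G}$ the directions satisfy $\|d^k\|\leq M$ for some uniform $M$. Assume for contradiction that $\|\nabla E(u^{k_j})\|\geq \epsilon > 0$ along some subsequence. I would then show that there exists $\tau^*>0$, depending only on $\epsilon$, $\zeta$, $c$, $M$, and the modulus of uniform continuity of $\nabla E$ on $S$, such that the Armijo test is passed by every $\tau\in(0,\tau^*]$: indeed, writing $E(u^k-\tau d^k)-E(u^k)=-\tau\langle d^k,\nabla E(u^k)\rangle + \tau\langle d^k,\nabla E(u^k)-\nabla E(\xi)\rangle$ for some $\xi$ on the segment, the second term is controlled by $\tau M$ times the modulus of continuity at scale $\tau M$, which can be made smaller than $(1-c)\zeta\epsilon\,\tau$ uniformly in $j$. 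Backtracking then yields $\tau_{k_j}\geq \beta\tau^*$ for the backtracking factor $\beta$, and the Armijo inequality gives $E(u^{k_j})-E(u^{k_j+1}) \geq c\beta\tau^*\zeta\epsilon>0$, contradicting the convergence of the monotone bounded-below sequence $\{E(u^k)\}$. The subtlety is that we have no global Lipschitz constant on $\nabla E$; uniform continuity on the compact level set is what substitutes for the usual Zoutendijk-style lemma.

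Finally, for (iv), strict convexity together with coercivity guarantees a unique minimizer $u^*$, and a critical point of a convex function is automatically a global minimum. Since $\{u^k\}\subset S$ is bounded, any subsequential limit $\bar u$ satisfies $\nabla E(\bar u)=0$ by (iii) and continuity of $\nabla E$, hence $\bar u=u^*$ by strict convexity. As every subsequential limit of the bounded sequence equals the same point $u^*$, the full sequence must converge to $u^*$, completing the proof.
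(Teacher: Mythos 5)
Your proof is correct. The paper itself does not spell out an argument: its ``proof'' is a one-line appeal to standard line-search theory (Nocedal--Wright, Thm.~3.2), so in spirit you follow the same classical Armijo/Zoutendijk-type route, but you actually carry it out under the proposition's stated hypotheses, which the bare citation does not literally cover. The cited theorem assumes a Lipschitz-continuous gradient, which is not assumed here; you replace it by uniform continuity of $\nabla E$ on the compact sublevel set $S$ guaranteed by coercivity, together with the uniform bound $\norm{d^k}\le M$ coming from continuity of $\mathcal{G}$ on a compact set of inputs --- which is exactly where the continuity assumption on the network is needed. You also exploit the first-power condition $\iprod{d^k}{\nabla E(u^k)}\ge\zeta\norm{\nabla E(u^k)}$ to get a per-iteration decrease bounded below by a fixed constant along any subsequence with $\norm{\nabla E(u^{k_j})}\ge\epsilon$, which delivers the full limit $\norm{\nabla E(u^k)}\to 0$ rather than only a liminf; the strictly convex case via uniqueness of the minimizer and subsequential limits is likewise fine, as is your (sensible) reading that the method stops when $\nabla E(u^k)=0$, where $\mathcal{C}(\zeta,0)$ is the whole space and the inner loop need not terminate. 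Two cosmetic repairs: the backtracking trial points $u^k-\tau d^k$ may leave $S$, so the modulus of uniform continuity should be taken on the enlarged compact set $\{u:\mathrm{dist}(u,S)\le M\}$ (harmless, since $\tau\le 1$ and $\norm{d^k}\le M$), and the accepted step satisfies $\tau_{k_j}\ge\min\{1,\rho\,\tau^*\}$ with the paper's backtracking factor $\rho$ rather than a halving factor. In short, your write-up is a correct, self-contained instantiation of the argument the paper merely cites, and it closes the small mismatch between the proposition's assumptions and those of the quoted textbook theorem.
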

\begin{proof}
The result is a conclusion of the descent direction, e.g. utilizing standard results like \cite[Thm. 3.2]{NoceWrig06}. 
\end{proof}

\begin{algorithm}[t!]
\caption{Learned energy dissipation with line-search}\label{alg:ourScheme}
\begin{algorithmic}[1]
\State \textbf{Inputs:} Starting point $u^0$, energy dissipating network $\mathcal{G}(\cdot,\cdot,\nabla E(\cdot); \theta)$, constants $c \in (0, 0.5), \rho \in (0,1)$
\While{not converged (e.g. based on $\|\nabla E(u^k)\|$)}
\State $d^k = \mathcal{G}(u^k, f, \nabla E(u^k); \theta)$
\State $\tau_{k} \leftarrow 1$ 
\State $u^{k+1} \leftarrow u^k - \tau_{k} d^k$
\While{ $E(u^{k+1})> E(u^k) - c \tau_k \langle d^k, \nabla E(u^k)\rangle$}
\State $\tau_k \leftarrow \rho \, \tau_k $
\State $u^{k+1} \leftarrow u^k - \tau_{k} d^k$
\EndWhile\label{innerwhileloop}
\EndWhile
\end{algorithmic}
\end{algorithm}

Due to property \eqref{eq:energyDissipation} we refer to our approach as \textit{energy dissipating networks}.  Our intuition is that such networks, which are trained on \eqref{eq:trainingLoss} but provably satisfy \eqref{eq:descentDirConstraint}, allow to predict paths in a given energy landscape that monotonically decrease the energy, but at the same time 
attract the iterates to locations which are more probable according to the training examples.
We have illustrated this concepts for $E(u)=\|Au-f\|^2$ and $A=[1 \ \ 1]$ in the two-dimensional toy example shown in Fig.~\ref{fig:2dIllustration}. While plain gradient descent merely drives the iterates towards the subspace of solutions in a direct way, the proposed energy dissipation networks have the freedom to exploit training data and learn a vector field that drives any initialization to solutions near the training data at $u = \begin{bmatrix} 0 & 5 \end{bmatrix}^\top$ for $f = 5$. 

Besides the possibility of reconstructing solutions that reflect properties of the training examples in underdetermined cases as illustrated in Fig.~\ref{fig:2dIllustration}, the intuition of taking a more favorable path extends to energies $E(u)=\|Au-f\|^2$ where $A$ is invertible, but severely ill-conditioned: As we will illustrate in the results (Sec.~\ref{sec:numerics}), a typical gradient descent iteration on energies $E(u)=\|Au-f^\delta\|^2$ for noisy observations $f^\delta$ typically improves the results up to a certain iteration before the ill-conditioned matrix $A$ leads to the introduction of heavy noise and a degradation of the results. Thus, early stopping, or discrepancy principles are used to stop the iteration once $\|Au-f^\delta\| \approx \|f-f^\delta\|$. Again, a favorable path hopefully allows to stop at a better overall solution. As discrepancy principles are frequently used in the analysis of (continuous / infinite dimensional) inverse problems, our framework possibly allows to derive a regularization method from \eqref{eq:proposedScheme} in the sense of \cite[Def. 4.7]{benning_burger_2018}.

\subsection{Ensuring global linear convergence}
While Prop.~\ref{prop:convergence} establishes convergence, in applications an upper bound on the number of iterations required to reach a certain optimality gap can be desirable. To establish such a bound, we will make the following assumptions, which are standard in literature:
\begin{assumption}
    The energy $E$ is $L$-Lipschitz differentiable, i.e., it satisfies 
    the inequality 
    \begin{equation}
      E(v) \leq E(u) + \iprod{\nabla E(u)}{v - u} + \frac{L}{2} \norm{u - v}^2,
      \label{eq:Lsmooth}
    \end{equation}
    and furthermore satisfies the Polyak-{\L}ojaciewicz inequality with modulus $\mu > 0$,
    \begin{equation}
    \frac{1}{2} \norm{\nabla E(u)}^2 \geq \mu (E(u) - E^*),
    \label{eq:PL}
    \end{equation}
    where $E^* = \min_u E(u)$ is the global minimum.
    \label{asm:one}
\end{assumption}
We remark that functions satisfying the Polyak-{\L}ojaciewicz inequality \eqref{eq:PL} include strongly convex functions, possibly composed with a linear operator that has a non-trivial kernel. Notably, convexity is not a required assumption, but rather invexity \cite{karimi2016linear,necoara2018linear}. All examples we consider in the numerical experiments fulfill Asm.~\ref{asm:one}.

To establish a linear convergence result, we constrain the output of the network to a slightly different constraint set as the one considered in \eqref{eq:constraintSet}. 
For $\zeta_1 \geq \zeta_2 > 0$ we define it as:
\begin{equation}
    \C{\zeta_1,\zeta_2}{g} = \left \{ d ~|~ \langle d, g \rangle \geq \zeta_1 \| g \|^2, \| d \| \leq \zeta_2 \| g \| \right \}.
    \label{eq:constraintSetStrict}
\end{equation}
To give an intuition about \eqref{eq:constraintSetStrict}, note that the two conditions imply that the angle $\theta$ 
between $d$ and $g$ is bounded by $\cos \theta \geq \zeta_1 / \zeta_2$.
Therefore, by choosing appropriate $\zeta_1$ and $\zeta_2$ one can control the angular deviation between the predicted direction of the network and the actual gradient direction. 
The linear convergence result is made precise in the following proposition.
\begin{proposition}
    Assume that the energy $E$ satisfies Asm.~\ref{asm:one} and that the update directions given by the network meet $d^k \in \C{\zeta_1,\zeta_2}{\nabla E(u^k)}$. Then \eqref{eq:proposedScheme} with constant step size $\tau_k \equiv \zeta_1 / ((\zeta_2)^2 L)$ converges linearly,
    \begin{equation}
        E(u^{k+1}) - E^* \leq \left( 1 -  \frac{\gamma^2 \mu}{L} \right)^k (E(u^0) - E^*),
    \end{equation}
    where $\gamma = \zeta_1 / \zeta_2$.
    \label{prop:linear}
\end{proposition}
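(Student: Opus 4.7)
The plan is to follow the classical $L$-smoothness plus Polyak-{\L}ojasiewicz recipe for linear convergence of gradient-type methods, and to verify that the two-sided constraint \eqref{eq:constraintSetStrict} together with the stated step size gives exactly the ratio $\gamma^2 \mu / L$ as contraction factor.

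First I would apply the descent lemma \eqref{eq:Lsmooth} to the update $u^{k+1} = u^k - \tau_k d^k$, which yields
\begin{equation*}
E(u^{k+1}) \leq E(u^k) - \tau_k \iprod{\nabla E(u^k)}{d^k} + \tfrac{L \tau_k^2}{2} \norm{d^k}^2.
\end{equation*}
Next I would use the two constraints defining $\Cs{\zeta_1,\zeta_2}{\nabla E(u^k)}$ to lower-bound the linear term by $\zeta_1 \norm{\nabla E(u^k)}^2$ and to upper-bound the quadratic term by $\zeta_2^2 \norm{\nabla E(u^k)}^2$. This gives a quadratic in $\tau_k$ of the form $(-\tau_k \zeta_1 + \tau_k^2 L \zeta_2^2 / 2)\norm{\nabla E(u^k)}^2$, whose minimizer is precisely $\tau_k = \zeta_1 / (\zeta_2^2 L)$. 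Plugging this choice in collapses the two terms into
\begin{equation*}
E(u^{k+1}) \leq E(u^k) - \frac{\zeta_1^2}{2 \zeta_2^2 L} \norm{\nabla E(u^k)}^2 = E(u^k) - \frac{\gamma^2}{2L}\norm{\nabla E(u^k)}^2.
\end{equation*}

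Then I would invoke the Polyak-{\L}ojasiewicz inequality \eqref{eq:PL} to replace $\tfrac{1}{2}\norm{\nabla E(u^k)}^2$ by $\mu(E(u^k) - E^*)$, giving
\begin{equation*}
E(u^{k+1}) - E^* \leq \left(1 - \frac{\gamma^2 \mu}{L}\right)\bigl(E(u^k) - E^*\bigr).
\end{equation*}
A routine induction over $k$ then yields the claimed geometric bound (noting that the $k$ versus $k+1$ exponent is a matter of indexing convention in the statement).

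I do not anticipate a real technical obstacle: once the two constraints from \eqref{eq:constraintSetStrict} are used to sandwich the directional derivative and step magnitude, everything is algebraic. The only subtle point worth highlighting is that the given $\tau_k$ is precisely the minimizer of the quadratic upper bound in $\tau_k$, which is why the two-sided control on $d^k$ is essential: the lower bound $\iprod{d^k}{\nabla E(u^k)}\geq \zeta_1\norm{\nabla E(u^k)}^2$ alone is not enough, one also needs $\norm{d^k}\leq \zeta_2 \norm{\nabla E(u^k)}$ to control the $L$-smoothness remainder. Also note that the contraction factor $1 - \gamma^2 \mu /L$ lies in $(0,1)$ since $\gamma \leq 1$ (by Cauchy-Schwarz applied to the two constraints) and $\mu \leq L$ (a standard consequence of combining \eqref{eq:Lsmooth} and \eqref{eq:PL}), so the bound is nontrivial.
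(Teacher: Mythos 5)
Your proof is correct and follows essentially the same route as the paper: descent lemma, the two constraints of \eqref{eq:constraintSetStrict} to bound the inner-product and norm terms, the step size $\tau_k = \zeta_1/(\zeta_2^2 L)$ collapsing the bound to $-\tfrac{\gamma^2}{2L}\norm{\nabla E(u^k)}^2$, then the PL inequality and iteration. Your added remarks (the step size being the minimizer of the quadratic bound, $\gamma \leq 1$ and $\mu \leq L$ ensuring a nontrivial contraction factor) are consistent extras, not a different argument.
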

\begin{proof}
  Combining \eqref{eq:Lsmooth} and the descent iteration \eqref{eq:proposedScheme} we have the following bound on the decrease of the energy:
  \begin{equation}
    \begin{aligned}
      &E(u^{k+1}) - E(u^k) \leq -\tau \iprod{\nabla E(u^k)}{d^k}\\
      &\qquad  + \frac{L \tau^2}{2} \norm{d^k}^2.
    \end{aligned}
  \end{equation}
  Using $d^k \in \C{\zeta_1,\zeta_2}{\nabla E(u^k)}$ we can further bound this
  \begin{equation}
    \begin{aligned}
      E(u^{k+1}) - E(u^k) &\leq \left(\frac{L \tau^2 (\zeta_2)^2}{2} -\tau \zeta_1\right) \norm{\nabla E(u^k)}^2 \\
&= -\frac{\gamma^2}{2L} \norm{\nabla E(u^k)}^2.
    \end{aligned}
  \end{equation}
  Finally, by \eqref{eq:PL} we have
  \begin{equation}
    E(u^{k+1}) - E(u^k) \leq -\frac{\gamma^2 \mu}{L}(E(u^k) - E^*),
  \end{equation}
  and rearranging and subtracting $E^*$ on both sides gives 
  \begin{equation}
    E(u^{k+1}) - E^* \leq \left( 1 -  \frac{\gamma^2 \mu}{L} \right) (E(u^k) - E^*),
  \end{equation}
  which yields the above result.
\end{proof}
Therefore, the required number of iterations to reach an $\varepsilon$-accurate solution is in the order of $\mathcal{O} \left( \gamma^{-2} (L / \mu) \log(1 / \varepsilon) \right)$. 
By giving the network more freedom to possibly deviate from the true gradient ($\theta \to \pm 90^{\circ}$, i.e., $\gamma \to 0$), more iterations are required in the worst-case.
As an example, the above analysis tells us that if we allow the directions predicted by the network to deviate by at most $45^{\circ}$ from the true gradient, then in the worst case 
we might require twice as many ($\cos 45^{\circ} = 1 / \sqrt{2}$) iterations to find an $\varepsilon$-accurate solution as standard gradient descent.
Nevertheless, we want to stress that there are also directions which dramatically improve the rate of convergence (for example the
Newton direction), which is not captured by this worst-case
analysis. As in practice the training data could coincide with
the model, it is to be expected that the learned direction will lead
to much faster initial convergence than the gradient direction. The
above analysis should therefore be only seen as a bound of what could
theoretically happen in case the network systematically picks the
worst possible (in the sense of energy decay) direction. Therefore, 
we use Alg.~\ref{alg:ourScheme} which performs a line-search 
instead of choosing the very conservative step-size from Prop.~\ref{prop:linear},
to benefit from the scenario when the direction predicted by the model is good.

\CR{We remark that the factor in the above iteration complexity could possibly be improved 
using accelerated variants of gradient descent \cite{nesterov2018lectures,necoara2018linear}.}

\section{Implementation} 
\subsection{Satisfying the descent constraints}
As discussed in the previous section, for the convergence results in Prop.~\ref{prop:convergence} and Prop.~\ref{prop:linear} to hold, we either have 
to provably satisfy the constraints \eqref{eq:descentDirConstraint} or \eqref{eq:constraintSetStrict}.
The constraint \eqref{eq:constraintSet} is a half-space, and can be satisfied by the projection: 
\begin{align}
    \label{eq:euclidProjection}
    z \mapsto z + \max \{ \zeta - \langle z, n \rangle,0 \} \cdot n, \quad n = g / \norm{g},
\end{align}
which merely consists of linear operations and a rectified linear unit (ReLU), so that it is readily implemented in any deep learning framework.
For simplicity, for the set \eqref{eq:constraintSetStrict} we propose to merely use a parametrization. 
\begin{proposition}
A surjective map onto \eqref{eq:constraintSetStrict} is given by
\begin{equation}
    z \mapsto \hat \eta g + \Pi_{B}(z - \eta g),
    \label{eq:surjective}
\end{equation}
where $\eta = \iprod{z}{g} / \norm{g}^2$, $\hat \eta = \Pi_{[\zeta_1, \zeta_2]}(\eta)$ and $\Pi_B$ is the projection onto $B = \{ d ~|~ \norm{d} \leq \sqrt{ (\zeta_2)^2 - \hat \eta^2 } \norm{g} \}$.
\end{proposition}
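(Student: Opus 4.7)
The plan is to reduce the claim to a decomposition along $g$ and its orthogonal complement, after which everything becomes essentially one-dimensional. Write any $d \in \mathbb{R}^n$ uniquely as $d = \alpha g + w$ with $w \perp g$, so that $\alpha = \iprod{d}{g}/\norm{g}^2$. Under this decomposition the two inequalities defining $\Cs{\zeta_1}{\zeta_2}(g)$ become
\begin{equation}
  \alpha \geq \zeta_1, \qquad \alpha^2 \norm{g}^2 + \norm{w}^2 \leq (\zeta_2)^2 \norm{g}^2,
\end{equation}
which, because $\zeta_1 \leq \zeta_2$, force $\alpha \in [\zeta_1,\zeta_2]$ and $\norm{w} \leq \sqrt{(\zeta_2)^2 - \alpha^2}\,\norm{g}$. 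This gives a clean parametrization of the target set.

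Next I would verify that the image of the map \eqref{eq:surjective} lies in $\Cs{\zeta_1}{\zeta_2}(g)$. By definition of $\eta$ the residual $z - \eta g$ is orthogonal to $g$. Since $B$ is a Euclidean ball centered at the origin in $\mathbb{R}^n$, the projection $\Pi_B$ either returns its argument or scales it by a positive factor; in either case it preserves orthogonality to $g$. Hence the output decomposes as $\hat\eta g + w'$ with $w' \perp g$ and $\norm{w'} \leq \sqrt{(\zeta_2)^2 - \hat\eta^2}\,\norm{g}$, and since $\hat\eta \in [\zeta_1,\zeta_2]$, both defining inequalities of $\Cs{\zeta_1}{\zeta_2}(g)$ hold.

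For surjectivity, I would simply invoke the parametrization: given any $d^\star \in \Cs{\zeta_1}{\zeta_2}(g)$, decompose $d^\star = \alpha^\star g + w^\star$ with $\alpha^\star \in [\zeta_1,\zeta_2]$ and $\norm{w^\star} \leq \sqrt{(\zeta_2)^2 - (\alpha^\star)^2}\,\norm{g}$, and feed $z := d^\star$ into the map. Then $\eta = \alpha^\star$ already lies in $[\zeta_1,\zeta_2]$, so $\hat\eta = \alpha^\star$; the residual $z - \eta g = w^\star$ already satisfies the radius bound of $B$, so $\Pi_B(w^\star) = w^\star$; and the map returns $\alpha^\star g + w^\star = d^\star$.

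There is no genuinely hard step here; the only item that requires a moment's care is the observation that $\Pi_B$ preserves the orthogonality of $z - \eta g$ to $g$, which is needed so that the $g$-component of the output is exactly $\hat\eta g$ and not perturbed by a contribution of the projected residual. Everything else is bookkeeping around the orthogonal decomposition.
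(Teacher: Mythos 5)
Your proof is correct and follows essentially the same route as the paper: decompose along $g$ and its orthogonal complement, use $\hat\eta\in[\zeta_1,\zeta_2]$ for the inner-product constraint and the Pythagorean identity (with the observation that projection onto the origin-centered ball preserves orthogonality to $g$) for the norm constraint. You additionally spell out surjectivity by noting the map acts as the identity on points already in $\C{\zeta_1,\zeta_2}{g}$, a step the paper's proof leaves implicit, but this is a completion of the same argument rather than a different approach.
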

\begin{proof}
To see this, first note that since $\iprod{z - \eta g}{g} = 0$ also holds after the projection, the first constraint in \eqref{eq:constraintSetStrict}, 
\begin{equation}
    \iprod{\hat \eta g + \Pi_{B}(z - \eta g)}{g} \geq \zeta_1 \norm{g}^2,
\end{equation}
is satisfied since $\hat \eta \geq \zeta_1$. For the second condition, we have due to orthogonality:
\begin{align}
\norm{\hat \eta g + \Pi_{B}(z - \eta g)}^2 &= \hat{\eta}^2 \norm{g}^2 + \norm{\Pi_{B}(z - \eta g)}^2 \notag \\
&\leq (\zeta_2)^2 \norm{g}^2,
\end{align}
so that $\hat \eta g + \Pi_{B}(z - \eta g) \in \C{\zeta_1, \zeta_2}{g}$.
\end{proof}
 To avoid problems with the division by $\| g \|$, we approximate it using $\max \left \{ \| g \|, 10^{-6} \right \}$, but also note that Alg.~\ref{alg:ourScheme} stops once $\| g \|$ becomes small. 

\subsection{Lagged updating of the training distribution}
\label{sec:training}
For training, a first idea might be to incorporate a loss of the form \eqref{eq:trainingLoss} where $u \sim \mathcal{X}$ is sampled from the uniform distribution over the entire space of possible inputs. The latter is very high dimensional in most applications, and due to the curse of dimensionality it is unreasonable to aim at covering it exhaustively. Therefore we propose to introduce a dependency on the model parameters $u \sim \mathcal{X}_\theta$ and consider only inputs which are visited when running descent with the current model. To optimize such an objective, where the training distribution also depends on the model parameters, we propose the following iterative "lagged" strategy, which is an essential component of our approach.

The method is bootstrapped by fixing $\mathcal{X}_\theta$ to be the distribution of inputs obtained by running regular gradient descent on $E$ (up to a maximum number of iterations). After a certain number of epochs, we update the distribution to also contain iterates obtained by the algorithm \eqref{eq:proposedScheme} with the current parameters to generate new (network-based) paths. As the path of energy dissipating networks can differ from plain gradient descent significantly, such an iterative strategy is vital for the success of the training. 

\begin{figure}[t!]
  \centering
    \begin{center}
      \input{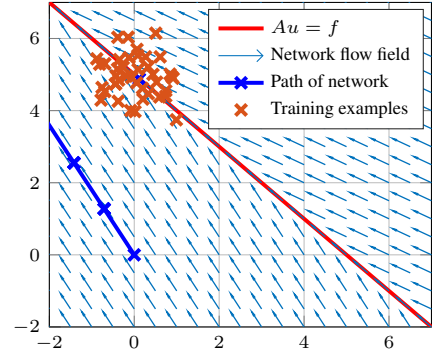}
    \end{center}
    \vspace{-0.4cm}
    \caption{Motivation for the proposed ``lagged'' iterative scheme: Training only on iterates which are visited in gradient descent leads to a rough first guess. To arrive at a more refined result as shown on the right in Fig.~\ref{fig:2dIllustration}, we propose an iterative scheme in which the training distribution is updated based on the current parameters.}
    \label{fig:bad}
\end{figure}
\CR{Returning to our two dimensional toy example of Fig.~\ref{fig:2dIllustration} we show in Fig.~\ref{fig:bad}  
the network's descent vector field after only training on the gradient descent iterates (left plot in Fig. \ref{fig:2dIllustration}).} The results are reasonable (as the network recognized that the iterations have to be further to the top left), but of course unsatisfactory for practical purposes as the iterates are pushed beyond the set of exemplary points (marked by red crosses). After only two additional iterations of updating the training distribution via the models' current path, we obtain the results shown in Fig.~\ref{fig:2dIllustration} on the right. 

\begin{table*}[t!]
    \centering
    \caption{Quantitative evaluation on the task of single image super resolution ($4 \times$ upscaling). The proposed energy dissipating approach simulatenously improves upon the baseline in terms of mean PSNR, SSIM and reconstruction error $\norm{Au-f}^2$.}
    \begin{tabular}{rccc|ccc|ccc}
         \toprule
         & \multicolumn{3}{c}{Gradient Descent} & \multicolumn{3}{c}{Baseline Network} & \multicolumn{3}{c}{Energy Dissipating}\\
         & PSNR & SSIM & $\norm{Au-f}^2$
         & PSNR & SSIM & $\norm{Au-f}^2$
         & PSNR & SSIM & $\norm{Au-f}^2$\\
         \midrule 
         BSD100 
         & $25.09$ & $0.6440$ & $\mathbf{9.5 \cdot 10^{-3}}$
         & $26.67$ & $0.7237$ & $2.8 \cdot 10^{-1}$
         & $\mathbf{27.12}$ & $\mathbf{0.7297}$ & $\mathbf{9.8 \cdot 10^{-3}}$ \\
         Urban100 
         & $22.19$ & $0.6230$ & $\mathbf{6.8 \cdot 10^{-2}}$ 
         & $24.44$ & $0.7292$ & $1.9 \cdot 10^{-0}$
         & $\mathbf{24.83}$ & $\mathbf{0.7460}$ & $\mathbf{6.4 \cdot 10^{-2}}$ \\
         Set5 
         & $26.80$ & $0.7448$ & $\mathbf{1.3 \cdot 10^{-2}}$ 
         & $29.94$ & $0.8548$ & $2.5 \cdot 10^{-1}$
         & $\mathbf{31.16}$ & $\mathbf{0.8726}$ & $\mathbf{8.5 \cdot 10^{-3}}$ \\
         Set14 
         & $24.74$ & $0.6684$ & $\mathbf{1.6 \cdot 10^{-2}}$ 
         & $27.22$ & $0.7642$ & $4.3 \cdot 10^{-1}$
         & $\mathbf{27.74}$ & $\mathbf{0.7709}$ & $\mathbf{1.3 \cdot 10^{-2}}$ \\
         \bottomrule
    \end{tabular}
    \label{tab:super}
\end{table*}

\subsection{Choosing a suitable energy}
In many imaging applications suitable data fidelity terms of model-driven approaches are well-known, or can be derived from maximum a-posteriori probability estimates for suitable noise models. While only using a suitable data fidelity term as an energy or Lyapunov function $E$ allows to predict solutions that approximate the measured data to any desired accuracy, energy dissipating networks could further be safeguarded with a classical regularization term, making a classical regularized solution the lower bound on the performance of the method. We will demonstrate numerical results for both such approaches in Sec.~\ref{sec:numerics}. Moreover, although our main motivation stemmed from linear inverse problems in the form of \eqref{eq:linearInvProb}, the idea of energy dissipating networks extends far beyond this field of application.

By choosing the distance to any desired constraint set as a surrogate energy, one can provably constrain the predicted solutions to any desired constrained set (at least if the some measure of distance to the desired set can be stated in closed form). We will illustrate such an extension to convex feasibility problems in Sec.~\ref{sec:feasibility} and Sec.~\ref{sec:sudokus}.

\section{Applications}
\label{sec:numerics}
We implemented the experiments from Sec.~\ref{sec:sir} and Sec.~\ref{sec:sudokus} using the PyTorch framework. The CT reconstruction experiments in Sec.~\ref{sec:ct} 
are implemented in Matlab. All models were trained using the Adam optimizer \cite{adam}. \CR{In all experiments $\ell$ in Eq.~\eqref{eq:trainingLoss} was chosen as the square loss.}

\subsection{Single image super resolution}
\label{sec:sir}
As a first application, we train an energy dissipating network on the task of single image super resolution. The linear operator in the energy $E(u) = \frac{1}{2} \norm{Au-f}^2$ is chosen as a $4 \times$ downsampling, implemented via average pooling.

The architecture both for the energy dissipating and baseline approach is based on \cite{dncnn}, which 
consists of $20$ blocks of $3\times 3$ convolutional filters with 64 channels, ReLUs, and batch normalization layers, before a final $3\times 3$ convolutional layer reduces the output to a single channel. 
For the energy dissipating approach, the result is fed into the layer given by \eqref{eq:surjective} with $\zeta_1 = 25$, $\zeta_2 = 10000$. 
Following \cite{dncnn}, we initialize with the ``DnCNN-B'' model pretrained on the task of image denoising, which we found important for the baseline to achieve good performance but had no effect on our approach. 
Starting from that initialization, we train on the specific task of $4 \times$ super resolution on $52 \times 52$ patches from images in the BSD100 training set. 
\CR{For the energy dissipating network, the training data is updated every $100$ mini-batches according to Sec.~\ref{sec:training}, choosing uniformly $0$--$10$ descent steps to generate the iterates. As the data is generated in an online fashion, samples of a previous model or from the gradient descent initialization are discarded due to efficiency reasons.}  

\CR{During testing, we used a fixed number of $15$ iterations in the descent with our network direction. For the gradient descent baseline (corresponding to simple up--sampling) we used $75$ iterations to reach a similar reconstruction error. For both methods, running more iterations did not significantly effect the result. Note that in principle one can run both convergent descent methods until the forward model is satisfied to an arbitrary accuracy.} 
A quantitative evaluation on a standard super resolution benchmark is given in the above Table~\ref{tab:super}. Note that our goal was not necessarily to achieve high PSNR results, but rather demonstrate in a controlled experiment that our approach can be used to introduce provable guarantees into existing models without significantly worsening (here even improving) the quality.

\subsection{CT Reconstruction}
\label{sec:ct}
Next, we consider reconstructing CT images using the (sparse matrix) operator provided in \cite{walnutData}. The architecture 
is the same as for the super resolution experiment, but using $17$ blocks and the projection layer given by \eqref{eq:euclidProjection}. 
In the lack of highly accurate CT scans, we simulate training data using phantoms from \cite{brainwebWebsite, brainwebArticle} (despite being MRI phantoms), also include random crops of natural images
from BSD68 \cite{Roth2009}, and reconstruct in 2D (slices only). \CR{We use $E(u) = \frac{1}{2}\|Au-f\|^2 + \alpha TV_\epsilon(u)$ (with $TV_\epsilon$ being a Charbonnier total variation
  regularization)} as our surrogate energy and start with the first 10 gradient descent iterates as the initial distribution of the training data for the lagged updating scheme described in
Sec. \ref{sec:training}.

\begin{figure*}[t!]
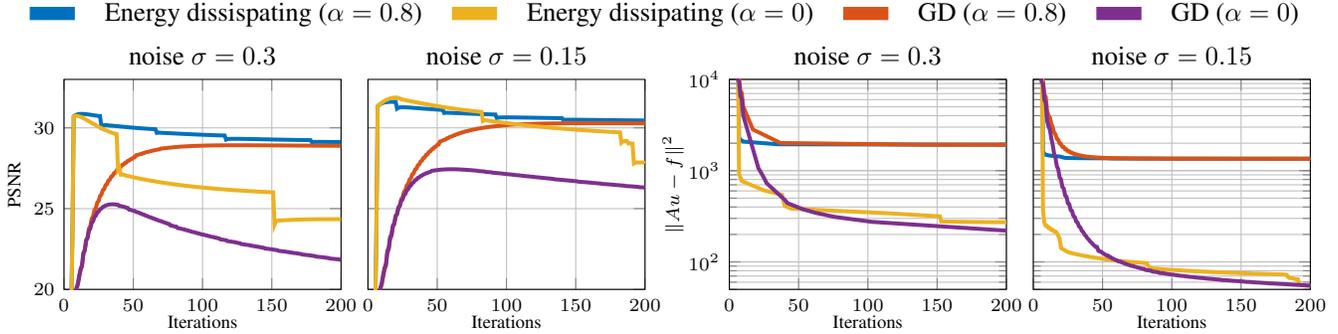

    \centering
    \setlength{\tabcolsep}{-3pt}
  \begin{tikzpicture}
    \begin{customlegend}[legend columns=4,legend style={align=center,draw=none,column sep=1.5ex},
      legend entries={
        Energy dissispating ($\alpha = 0.8$),
        Energy dissipating ($\alpha = 0$),
        GD ($\alpha = 0.8$),
        GD ($\alpha = 0$),
      }]
      \definecolor{mycolor1}{rgb}{0.00000,0.44700,0.74100}%
      \definecolor{mycolor2}{rgb}{0.85000,0.32500,0.09800}%
      \definecolor{mycolor3}{rgb}{0.92900,0.69400,0.12500}%
      \definecolor{mycolor4}{rgb}{0.49400,0.18400,0.55600}%
      \addlegendimage{mark=none,solid,line legend,line width=5pt,color=mycolor1}
      \addlegendimage{mark=none,solid,line width=5pt,color=mycolor3}   
      \addlegendimage{mark=none,solid,line width=5pt,color=mycolor2}   
      \addlegendimage{mark=none,solid,line width=5pt,color=mycolor4}   
    \end{customlegend}
  \end{tikzpicture}
    \begin{tabular}{cccc}
    \input{f2_1.tex}&
    \input{f2_2.tex}&
%
%
\definecolor{mycolor1}{rgb}{0.00000,0.44700,0.74100}%
\definecolor{mycolor2}{rgb}{0.85000,0.32500,0.09800}%
\definecolor{mycolor3}{rgb}{0.92900,0.69400,0.12500}%
\definecolor{mycolor4}{rgb}{0.49400,0.18400,0.55600}%
\begin{tikzpicture}

\begin{axis}[%
width=1.45in,
height=1.1in,
at={(1.011in,0.642in)},
scale only axis,
xmin=0,
xmax=200,
xlabel style={font=\color{white!15!black}},
xlabel={Iterations},
ymode=log,
ymin=5e1,
ymax=10000,
yminorticks=true,
xlabel style={font=\scriptsize}, 
ylabel style={font=\scriptsize}, 
ylabel={$\norm{Au-f}^2$},
tick label style={font=\scriptsize},
axis background/.style={fill=white},
title={noise $\sigma= 0.3$},
xmajorgrids,
ymajorgrids,
yminorgrids,
xlabel shift=-5pt,
ylabel shift=-5pt,
title style={yshift=-4pt},
]
\addplot [color=mycolor1, line width=1.5pt]
  table[row sep=crcr]{%
5	263444.138072195\\
6	70280.8523960492\\
7	2247.39043456023\\
10	2083.3646318642\\
35	1945.83079922411\\
200	1923.75361951341\\
};

\addplot [color=mycolor2, line width=1.5pt]
  table[row sep=crcr]{%
2	135891.707976608\\
3	25694.9074863274\\
4	25016.7421830812\\
5	12686.8842938571\\
6	10780.2091716356\\
7	10324.5449581425\\
8	7391.69748317428\\
9	7074.10145522658\\
10	5061.88184488793\\
17	2824.31421145378\\
37	2010.38579714469\\
137	1924.02614169109\\
200	1923.20507687541\\
};

\addplot [color=mycolor3, line width=1.5pt]
  table[row sep=crcr]{%
5	262108.330562042\\
6	68950.0290259919\\
7	915.057592777853\\
9	770.298644532612\\
25	607.559576590921\\
38	545.204779323311\\
40	409.671856607354\\
45	383.112183855396\\
113	342.218549491194\\
152	315.996923462007\\
153	277.363495501455\\
179	273.578098671302\\
200	272.309824605561\\
};

\addplot [color=mycolor4, line width=1.5pt]
  table[row sep=crcr]{%
2	135219.831696153\\
3	25097.0029615926\\
4	24377.5270375873\\
5	12017.6876268371\\
6	10026.5725598877\\
7	9487.87818284571\\
8	6523.62796216511\\
9	6073.19362460001\\
10	4041.69609860287\\
21	1064.70867182725\\
24	890.330990869566\\
27	726.816614773374\\
40	478.84868989617\\
42	440.764442609196\\
53	378.434125021492\\
62	345.632469826421\\
76	311.446989515071\\
103	274.811579699912\\
200	220.113951987349\\
};

\end{axis}
\end{tikzpicture}
    \input{f2_4.tex}
    \end{tabular}
    \vspace{-0.2cm}

    \caption{Comparing gradient descent (GD) with energy dissipating networks: the first two plots show the PSNR value over the number of iterations, and illustrate that the energy dissipating networks peak very quickly before (provably) converging to the same minimizer as their surrogate energies. As shown in the right plots, they quickly decrease the data discrepancy term and plateau in the regularized case, or keep decaying when using the data term as a surrogate energy only. Note that all curves of our approach were generated with the same energy dissipating networks, merely changing the surrogate energy during testing, which demonstrates an ability to generalize beyond the specific scenario used during training.}
    \label{fig:ct_results_plot}
\end{figure*}

We test the resulting model by simulating a CT-scan of a walnut image from \cite{walnutData}. We compare our scheme~\ref{alg:ourScheme} to plain gradient descent on the data term as well as to regularized reconstructions for different regularization parameters $\alpha$. Fig.~\ref{fig:ct_results_plot} shows the PSNR values as well as the decay of the data fidelity term over the number of iterations for two different noise levels. As we can see, after only 5-10 iterations, the energy dissipating networks have reached significantly higher PSNR values than their model-based counter parts, before converging to the same solution (at least in the regularized cases) as predicted by Prop.~\ref{prop:convergence}. \CR{The energy dissipating network with $\alpha=0$ reaches $\|Au-f\|\leq \sigma$ after 8 (for $\sigma = 0.3$) and 15 (for $\sigma = 0.15$) iterations, indicating that a discrepancy principle can be useful.} 

 Moreover, the data term decays quicker than for the respective gradient methods, indicating that the trained networks can even represent fast optimizers. All plots have been generated with a single network, hinting at an ability to generalize to different noise levels and regularization parameters. 

\subsection{Imposing convex constraints}
\label{sec:feasibility}
In this section we consider convex feasibility problems, in which the task is to find a point in 
the non-empty intersection of $N$ closed convex sets $\{ C_1, \hdots, C_N \}$.
A formulation as a differentiable energy minimization problem is given by:
\begin{equation}
    E(u) = \frac{1}{2N} \sum_{i=1}^N \dd_{C_i}^2(u),
    \label{eq:feas}
\end{equation}
where $\dd_C^2(u) = \min_{v \in C} ~ \norm{u - v}^2$ is the squared Euclidean distance of the point $u$ to the set $C$. The energy \eqref{eq:feas} is non-negative and reaches zero if and only if $u$ satisfies all constraints. By standard results in convex 
analysis \cite[Corollary~12.30]{BauschkeCombettes} it follows that $\nabla d_C^2 = 2 (\text{id} - \Pi_C)$ so that the gradient of \eqref{eq:feas} is $1$-Lipschitz and can be implemented if the projections onto the sets $C_i$ are available. Under mild regularity assumptions (which even hold for some nonconvex sets) the energy \eqref{eq:feas} satisfies the Polyak-{\L}ojaciewicz inequality, see \cite[Prop.~8.6]{lewis2007local}, and therefore also Asm.~\ref{asm:one}.

\subsection{Sudoku}
\label{sec:sudokus}
To demonstrate the previous application we tackle the problem of solving $9 \times 9$ Sudokus,
which has emerged as a benchmark for learning based methods \cite{optNet,palm2018recurrent}. Note that OptNet \cite{optNet} only considers $4 \times 4$ due to scalability issues as remarked in
\cite{palm2018recurrent}. We use a convex relaxation of the binary feasibility problem on $\{0, 1\}^{9 \times 9 \times 9}$ from \cite{artacho2014recent}. Specifically, we have $N=5$ in
\eqref{eq:feas}, where $C_1, \hdots C_3$ contain simplex constraints for each dimension of the tensor, $C_4$ constraints on the $3 \times 3$ fields in the Sudoku and $C_5$ encodes the givens. 

We consider the same architecture as for the previous experiments and train on a dataset of one million Sudokus \cite{park}. For the energy dissipating approach, we constrain the output to be in the constraint set \eqref{eq:constraintSetStrict} and use the scheme from Sec.~\ref{sec:training} with $15$ iterations to update the training data. 
\begin{table}[t]
\centering
    \caption{Evaluation on a test set of 50 easy Sudokus.}
        \begin{tabular}{cc|cc|cc}
         \toprule
         \multicolumn{2}{c}{GD (100 it.)} &
         \multicolumn{2}{c}{Baseline Net.} &
         \multicolumn{2}{c}{En. Diss. (100 it.)} \\
         Acc. & Solve &
         Acc. & Solve &
         Acc. & Solve \\
         \midrule
         $ 76.9 \%$ & $2.0 \%$ &   
         $ 82.6 \%$ & $4.0 \%$ &
         $ \mathbf{87.1 \%} $ & $\mathbf{52.0 \%} $ \\
         \bottomrule 
        \end{tabular}
\end{table}
In the above table we compare $100$ iterations of gradient descent (equivalent to averaged projections), a baseline which performs a forward pass through a (seperately trained) network and $100$
iterations of energy dissipation. We improve upon the baseline which indicates that it is possible to learn directions which reduce the energy of feasibility problems faster than the steepest descent direction. 

\section{Conclusion}
We constrained the outputs of deep networks to be descent directions for suitably chosen energy functions. \CR{Combined with a line-serach algorithm, this yields (linear) convergence to minimizers of the underlying energy during inference.} We further proposed an iterative training scheme to learn a model which provides descent directions that bias the optimization into the direction of
the training data. In experiments, we demonstrated that this approach can be used to control networks via model-based energies, and at the same time improve over standard end-to-end baselines.

{\small
\bibliographystyle{ieee_fullname}
\bibliography{refs}
}

\end{document}